\def\BibTeX{{\rm B\kern-.05em{\sc i\kern-.025em b}\kern-.08em
    T\kern-.1667em\lower.7ex\hbox{E}\kern-.125emX}}
\definecolor{shadecolor}{rgb}{.9,.9,.9}
\renewcommand{\vec}[1]{\boldsymbol{\mathbf{#1}}}
\newcommand{\real}{\mathbb{R}}
\def \x {\vec{x}}
\def \fct {V_f}
\def \dis {V_d}
\def \env {\mathbb{W}}
\def \seg {\overline}
\def \ray {\overrightarrow}
\def \bdr {\mathcal{B}}
\def \star {\mathcal{S}}
\def \orb {\mathbb{C}}
\def \cir {\mathcal{C}}
\def \traj {\mathcal{T}}
\def \radius {r}
\def \control {u}
\def \angpos {\Theta}
\theoremstyle{plain}
\newtheorem{theorem}{Theorem}
\title{\LARGE\bf Flow-Based Control of Marine Robots in Gyre-Like Environments
}
\author{Gedaliah Knizhnik$^1$, Peihan Li$^1$, Xi Yu$^2$, and M. Ani Hsieh$^1$
\thanks{This work was supported by the National Science Foundation Award Nos. CMMI-1760369 and IIS-1812319.}
\thanks{$^1$ Gedaliah Knizhnik, Peihan Li, and M. Ani Hsieh are with the GRASP Laboratory, University of Pensylvannia, Philadelphia, PA 19104. 
        {\tt\footnotesize knizhnik@seas.upenn.edu}}%
\thanks{$^2$ Xi Yu is with the Department of Mechanical and Aerospace Engineering, West Virginia University, Morgantown, WV 26506. 
        {\tt\footnotesize xi.yu1@mail.wvu.edu}}%
}
\newcommand{\copyrightstatement}{
    \begin{textblock*}{5.7in}(0.25in,0.25in) 

        \noindent
        \footnotesize
        This accepted article to ICRA is made available by the authors in compliance with IEEE policy.

        \noindent
        Please find the final, published version in IEEE Xplore, DOI: \href{https://doi.org/10.1109/ICRA46639.2022.9812331}{\textcolor{blue}{10.1109/ICRA46639.2022.9812331}}.
        

    \end{textblock*}

    \begin{textblock*}{5.7in}[0,1](0.25in,10.85in) 

        \noindent
        \footnotesize \scriptsize
        \copyright 2022 IEEE. Personal use of this material is permitted.
        Permission from IEEE must be obtained for all other uses, in any current or future media, including reprinting/republishing this material for advertising or promotional purposes, creating new collective works, for resale or redistribution to servers or lists, or reuse of any copyrighted component of this work in other works.
    \end{textblock*}
}
\begin{document}
\bstctlcite{MyBSTcontrol} 
\copyrightstatement                    

\maketitle

\begin{abstract}
We present a flow-based control strategy that enables resource-constrained marine robots to patrol gyre-like flow environments on an orbital trajectory with a periodicity in a given range. The controller does not require a detailed model of the flow field and relies only on the robot's location relative to the center of the gyre. Instead of precisely tracking a pre-defined trajectory, the robots are tasked to stay in between two bounding trajectories with known periodicity. Furthermore, the proposed strategy leverages the surrounding flow field to minimize control effort. We prove that the proposed strategy enables robots to cycle in the flow satisfying the desired periodicity requirements. Our method is tested and validated both in simulation and in experiments using a low-cost, underactuated, surface swimming robot, \textit{i.e.} the Modboat.
\end{abstract}


\section{Introduction} \label{sec:intro}
Autonomous marine vehicles (AMVs) have been employed to study and understand various biological \cite{ref:Caron2008,ref:Smith2010a}, chemical \cite{ref:Hajieghrary2017}, and physical processes in the ocean \cite{ref:Fiorelli2006}. Processes at the air-sea interface are also important to monitor, as exchanges between the ocean and atmosphere directly impact regional rainfall patterns, storm tracks, and sea levels. Similar to at-depth persistent monitoring, monitoring at the air-sea interface requires AMVs to operate for long periods of time, have the ability to move in and out of different monitoring regions of interest, and operate in high inertia environments whose dynamics are nonlinear and stochastic. 

In this work, we focus on the development of flow-based control strategies for minimally actuated autonomous surface mobile sensors, {\it i.e. active drifters} \cite{ref:Molchanov2014}, for persistent monitoring in dynamic environments like the ocean.  Specifically, we address the design of control strategies for resource-constrained, pointable thruster robots like the Modboat -- a small, low-cost, underactuated surface swimming robot\cite{modboatsOnline,Knizhnik2021ThrustRobot}, that would enable it to stay in and/or move between regions of interest.  Swarms of active drifters like the Modboat can simultaneously collect data at many distinct geographic locations, which is important for wave-height reconstruction and estimating spatiotemporal variations in sea-surface temperatures. Modboats can also adapt, albeit in a limited fashion, their sampling strategies to maximize information gain. Nevertheless, compared to larger, more capable autonomous surface vehicles, Modboats have limited power budgets and must rely on energy aware motion control and coordination strategies.

Existing small scale AMVs, such as AMOUR~\cite{Vasilescu2010AMOURPayloads}, $\mu$AUV~\cite{Hanff2016aUV2Vehicle}, and ANGELS~\cite{Mintchev2014} are designed to maintain powerful actuation so as to be able to overpower currents when accomplishing tasks. Energy efficient monitoring AMVs tend to be gliders~\cite{Leonard2007,Leonard2010,Paley2008CooperativeSystem}, whose actuation mechanism requires them to move in a sawtooth pattern for energy efficiency but makes exploiting surrounding currents challenging. In contrast, there is increasing work focused on planning of time and energy optimal paths for AMVs that leverage environmental flows~\cite{ref:Garau2005,ref:Kruger2007,ref:Witt2008,ref:Koay2013,ref:Rao2009,ref:Chakrabarty2013,ref:Eichhorn2015,ref:Otte2016,ref:Subramani2016,Kularatne2018OptimalDiscretization}. These approaches for planning with flows include graph-search  \cite{ref:Garau2005,ref:Koay2013, ref:Rao2009,ref:Chakrabarty2013,ref:Eichhorn2015,ref:Otte2016,Kularatne2018GoingFlows}, iterative minimization \cite{ref:Kruger2007,ref:Witt2008}, and level set expansion \cite{ref:Subramani2016}.  However, these approaches require full knowledge of the flow field, and obtaining good ocean current forecasts and nowcasts can be difficult. While there are existing strategies that enable planning in the presence of ocean current forecast uncertainties~\cite{ref:Wolf2010,ref:Pereira2013,ref:AlSabban2013,ref:Huynh2015,Kularatne2018OptimalUncertainties}, these approaches still require complete knowledge of the flow.

More recently, approaches have been proposed for resource constrained mobile sensors that rely on predictions of the average time required to switch between adjacent flow gyres in an ocean environment~\cite{ref:Heckman2014JDSM,ref:Heckman2016,Kularatne2018}. In~\cite{ref:Heckman2014JDSM,ref:Heckman2016}, it was shown that the addition of limited controls can enhance or abate the switching times between gyres. In contrast,~\cite{Kularatne2018} presented a strategy that uses limited control to achieve a desired average transition time. Similar to~\cite{ref:Heckman2014JDSM,ref:Heckman2016,Kularatne2018}, we propose a control strategy for minimally actuated resource-constrained mobile sensors that leverages the surrounding flow dynamics. Different from these existing works, we focus on strategies that keep robots in their designated monitoring regions.  Specifically, we present a control strategy that enables resource constrained mobile robots to continuously monitor an environment with gyre-like flow by approximately orbiting around the region's center. As such, the novelty of the contribution lies in the synthesis of a control strategy for mobile sensors that leverages the surrounding flow dynamics without full knowledge of the flow field.

The rest of the paper is organized as follows: gyre-like environments are defined in Sec.~\ref{sec:problem}, and \textit{flow-based control} is presented in Sec.~~\ref{sec:methodology}. Sec.~\ref{sec:expPlatform} details the Modboat as well as flow generation and modeling in the experimental facility. Results of both simulation and experimental validation are presented in Sec.~\ref{sec:results} and discussed in Sec.~\ref{sec:discussion}. Conclusions and directions for future work are discussed in Sec.~\ref{sec:conclusion}.


\section{Problem Formulation}\label{sec:problem}


\subsection{Gyre-Like Flow Fields}

We consider the following $2$-D flow field $\env \subset \real^2$ 
\begin{align} \label{eq:dissDyn}
    \dot{\x} = \fct (\x) + \dis (\x),
\end{align}
where $\x \in \env$. We assume the flow has dissipative dynamics captured by $\dis: \env \mapsto \real^2$ with the non-dissipative part described by $\fct: \env \mapsto \real^2$.  The solutions or trajectories of~\eqref{eq:dissDyn} form roughly concentric orbits. 

While many flows can be described by~\eqref{eq:dissDyn}, the wind-driven double-gyre flow model is a common example and is often used to describe large scale ocean circulation~\cite{veronis1966wind1,veronis1966wind2}. Eq.~\eqref{eq:exp1} is an example of a simplified version of the model where $\mu$ is the dissipation parameter.
\begin{align}\label{eq:exp1}
    \mathbf{Ex. 1}: \begin{pmatrix}
    \dot{x}_1 \\
    \dot{x}_2
    \end{pmatrix} = \begin{pmatrix}
    -\pi A\sin(\frac{\pi x_1}{s})\cos(\frac{\pi x_2}{s})-\mu x_1\\
    \hphantom{-}\pi A\cos(\frac{\pi x_1}{s})\sin(\frac{\pi x_2}{s})-\mu x_2
    \end{pmatrix}
\end{align}

Another example model is a vortex flow that is generated by a spinning blade at a fixed underwater location. The 2-D model of the surface flow can be written as~\eqref{eq:exp2}, where $r$ is the distance between $(x_1,x_2)^T$ and the origin, $\Omega(r)$ the angular velocity satisfying $\Omega(r_1)<\Omega(r_2)$ if and only if (iff.) $r_1>r_2$, and $\mu$ a parameter specifying the dynamics along the radial direction. 
\begin{align}\label{eq:exp2}
    \mathbf{Ex. 2}: \begin{pmatrix}
    \dot{x}_1\\
    \dot{x}_2
    \end{pmatrix} =\begin{pmatrix}
    -\Omega(r)x_2 - \mu x_1\\
    \hphantom{-}\Omega(r)x_1-\mu x_2
    \end{pmatrix},
\end{align}


\subsection{Non-dissipative dynamics of gyre-like flow fields} \label{sec:nonDissipative}

We briefly summarize some important properties of $\fct(\x)$ and $\dis(\x)$.  Consider a flow field with $\dis (\x) =0$ given by 
\begin{align}\label{eq:nonDissDyn}
    \env_f: \dot{\x} = \fct (\x).
\end{align}
The trajectory starting at any $\x(0) =\x_i \in \env_f$ in this flow will form a closed loop. We denote this closed-loop trajectory as $\bdr_i$ and note that $\bdr_i$ is the boundary of a star-shaped domain $\star_i$. The trajectories in $\env_f$ are all non-intersecting, concentric, closed loops that bound star-shaped domains. In fact, the trajectories of the non-dissipated flow field~\eqref{eq:nonDissDyn} are: 
\begin{itemize}
    \item \textbf{Periodic: }If $\x(t) =\x_i \in \env_f$, $\exists~ T_i>0$ such that $\x(t+T_i) = \x(t) = \x_i, \forall t>0, \forall \x_i \in \env_f$.
    \item \textbf{Centered: }$\exists \x_o \in \env_f$, such that if $\x(t) = \x_o$ for some $t>0$, then $\x(t+\delta) = \x_o,$ $\dot{\x}(t+\delta) = 0$, $\forall \delta>0$. 
    \item \textbf{Star-shaped: }$\forall \x_i \in \env_f$, let $\seg{\x_o\x_i}$ be the line segment linking $\x_o$ and $\x_i$, and $\star_i$ be the star-shaped domain bounded by the trajectory $\bdr_i$ passing through $x_i$, we have $\seg{\x_o\x_i} \subset \star_i$. 
    \item \textbf{Non-intersecting: }$\forall \x_i \in \env_f$, let $\ray{\x_o\x_i(0)}$ be the line that starts at $\x_o$ and passes $\x_i(0)$, then $\x_i(t) \not\in \ray{\x_o\x_i(0)}$, except for $t = 0, T_i, 2T_i,...$.
\end{itemize}

Furthermore, consider two trajectories $\bdr_i$ and $\bdr_j$ in the flow field $\env_f$ with the related domains $\star_j\subset \star_i$. We have the following property:
\begin{itemize}
    \item \textbf{Monotonically changing angular velocities} For any line $\ray{\x_o\x_l}$ that starts at $\x_o$, let $\x_i, \x_j$ be the intersection of $\ray{\x_o\x_l}$ and $\bdr_i, \bdr_j$ respectively. There is always a segment $\seg{\x_p\x_q}\subset \ray{\x_o\x_l}$, such that if $\x_i,\x_j \in \seg{\x_p\x_q}$ and $\|\x_o\x_i\| < \|\x_o\x_j\|$, then the angular velocities at $\x_i$ and $\x_j$ always satisfy $\omega(\x_i)<\omega(\x_j)$ or $\omega(\x_i)>\omega(\x_j)$.
\end{itemize}

The above properties are satisfied by the non-dissipated dynamics of the general flow models considered in this work. We will develop control schemes based on these properties. 


\subsection{Dissipative dynamics}

In a dissipative flow field, the motion energy of an object in the flow decreases irreversibly, leading the object to arrive at a point where its velocity drops to zero, as in the example in~\eqref{eq:exp1}. In the example given in~\eqref{eq:exp2}, the center point is driven by an external force and the flow spins outward instead of inward, reflecting an inverse model of a dissipated gyre-like flow field. The center point $\x_z$  of the dissipated flow field may overlap with (as in~\eqref{eq:exp2}) or be close to (as in~\eqref{eq:exp1}) the center of the trajectories of non-dissipated system $\x_o$.

In both examples, for any $\x(0) \in \env$, there is always a future time $t$ such that $\x(t)$ finishes a cycle around $\x_z$ and is closer (as in~\eqref{eq:exp1}) or farther (as in~\eqref{eq:exp2}) to $\x_z$ than $\x(0)$. Therefore, we can \textit{roughly} say that the dissipation in the flow field continuously brings an object in the flow towards a trajectory that is closer (farther) to the center. Notice that in~\eqref{eq:exp1}, this pattern is roughly met only when the dissipation is minimal. A large dissipation rate will cause the flow center to deviate significantly from the center of the non-dissipated flow, and the continuous shift of trajectories would not hold.


\subsection{Problem statement}

Obtaining a perfect model of a real-world flow field is generally hard, but rough estimates can be calculated from sparse local measurements or satellite imagery. Since many commonly used gyre-like flow fields have the properties described in Sec.~\ref{sec:nonDissipative}, we assume these properties apply to a wide range of real-world flow fields. Although real-world flows are highly dynamic, seasonal patterns are often persistent enough to leverage for informed control.

We are interested in developing control schemes that leverage these flow fields, are simple enough to be applicable to low-cost or underactuated robots, and do not require a perfect model or knowledge of the flow. The objective is {\it not} to perfectly track a previously designed trajectory, but rather to achieve control over certain performance parameters, \textit{e.g.}, arriving at a region within a time window, circulating in the gyre-like flow over a range of frequencies, etc. 

\textbf{P1:} Assume a flow-field $\env$ satisfying all the properties in Sec.~\ref{sec:nonDissipative}, and that we have:

\begin{itemize}
    \item A robot that can act as a pointable thruster on a time-scale much less than the period of the flow\footnote{On realistic ocean timescales, effectively all robots satisfy this property.}.
    \item Imperfect knowledge about the shape of the gyre orbits.
    \item Imperfect knowledge of the gyre center.
    \item Imperfect knowledge about the robot's distance and orientation relative to the gyre center. 
\end{itemize}

The problem is to design controllers that let the robot traverse a cycle of the gyre-like flow with a period in a given range $(\underline{T},\overline{T})$.


\section{Methodology}\label{sec:methodology}

We leverage the properties of gyre-like flow fields introduced in Sec.~\ref{sec:problem} to design our controller to achieve the goal described in \textbf{P1}. Since non-dissipated trajectories are boundaries of star shapes, the flow field can be mapped to a circular orbital model as in~\eqref{eq:orbit_field}, where $\radius \in \real_+$, $\angpos \in [0,2\pi)$, using a linear map $g:(x_1,x_2)^T\in\env \mapsto (\radius,\angpos)^T\in\orb $, with
\begin{align} \label{eq:orbit_field}
    \orb: \begin{pmatrix}
    \dot{\radius}\\
    \dot{\angpos}
    \end{pmatrix} = \begin{pmatrix}
    -\nu(\radius)\\
    \Omega(\radius,\angpos)
    \end{pmatrix},  
\end{align}
such that $\angpos = \arctan\left ( \frac{x_2-\x_z(2)}{x_1-\x_z(1)} \right )$ (the angle from the gyre center to the robot) and $\radius = \radius_i$ always holds for the same non-dissipative trajectory $\bdr_i$. 

For any $\radius_i$, the trajectory of $\dot{\angpos} = \Omega(\radius_i,\angpos)$ forms a circular orbit $\cir_i$, and $\nu(\radius_i)>0$ captures the continuous shift to an inner circle caused by the dissipation (whereas $\nu(\radius_i)<0$ causes a continuous shift to an outer circle). There is a range of radius $(\underline{\radius},\overline{\radius})$, such that $\forall \angpos_j$, there is always $\frac{d\Omega(\radius,\angpos_j)}{d\radius}<0$, if $\radius\in(\underline{\radius},\overline{\radius})$.  The period of completing one cycle along a given orbit $\cir_i$ is calculated as 
 \begin{align*}
     T_i = \int_{\angpos = 0}^{2\pi} \frac{1}{\Omega(\radius_i,\angpos)} d\angpos,
 \end{align*}
 and therefore $T_i<T_j$ iff. $\underline{\radius}<r_i<r_j<\overline{\radius}$.
 
We design a controller that only relies on the robot's relative location from the center given by
\begin{align} \label{eq:controller}
 \dot{\vec{\radius}} = \control(\vec{\radius})
\end{align}
where $\vec{\radius}$ is a vector along the radial direction and $\|\vec{\radius}\| = \radius$. We consider completing $2\pi~\si{rad}$ around $\x_z$ as one cycle and the trajectory formed over one cycle can be described as $\traj_u = (\radius_u(\angpos), \angpos)^T$, where $\angpos \in [0,2\pi)$ and $\radius_u(\angpos) \in \real_+$.
 
\begin{theorem} \label{thm:periodBound}
Consider a trajectory $\traj_u = (\radius_u(\angpos), \angpos)^T$ formed in a flow field $\orb$, with its dynamics shown in~\eqref{eq:orbit_field} and under a controller as shown in~\eqref{eq:controller}. The time needed to complete a cycle of this trajectory is $T_{\traj_u}$. If $\radius_u(\angpos)\in(\radius_i,\radius_j)$ and $(\radius_i,\radius_j)\subset (\underline{\radius},\overline{\radius})$, then $T_{\traj_u}\in(T_i,T_j)$.
 \end{theorem}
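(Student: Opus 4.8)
The plan is to express the cycle time $T_{\traj_u}$ as a single integral over the angular coordinate $\angpos$ and then compare it, pointwise in $\angpos$, against the two fixed-orbit integrals $T_i$ and $T_j$, exploiting the monotonicity of $\Omega$ in $\radius$.

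First I would observe that the controller~\eqref{eq:controller} acts purely along the radial direction, so it leaves the angular dynamics $\dot{\angpos}=\Omega(\radius,\angpos)$ untouched; the control only reshapes the radial profile $\radius_u(\angpos)$ swept out over one cycle. Since $\traj_u$ stays in the band $(\radius_i,\radius_j)\subset(\underline{\radius},\overline{\radius})$, where $\Omega$ is sign-definite (and positive, so that $\angpos$ advances monotonically), I can reparametrize time by $\angpos$ through $dt = d\angpos/\dot{\angpos} = d\angpos/\Omega(\radius_u(\angpos),\angpos)$ and write
\begin{align*}
    T_{\traj_u} = \int_{0}^{2\pi} \frac{1}{\Omega(\radius_u(\angpos),\angpos)}\, d\angpos .
\end{align*}

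Next I would invoke the monotonically-changing angular-velocity property: for every fixed $\angpos$ and every $\radius\in(\underline{\radius},\overline{\radius})$ we have $\frac{d\Omega(\radius,\angpos)}{d\radius}<0$. Because $\radius_i < \radius_u(\angpos) < \radius_j$ for all $\angpos$, this gives the strict pointwise chain $\Omega(\radius_j,\angpos) < \Omega(\radius_u(\angpos),\angpos) < \Omega(\radius_i,\angpos)$; taking reciprocals (legitimate since all three quantities are positive) reverses the ordering. Integrating these strict inequalities over $\angpos\in[0,2\pi)$ yields $T_i < T_{\traj_u} < T_j$, i.e. $T_{\traj_u}\in(T_i,T_j)$, with strictness preserved because the pointwise bounds are strict on the whole interval of integration.

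The reparametrization and the integral comparison are the routine parts. The main thing to be careful about is justifying that the controlled trajectory genuinely closes a cycle with $\angpos$ strictly increasing from $0$ to $2\pi$ — equivalently, that $\Omega$ never vanishes or changes sign along $\traj_u$ — so that the change of variables is valid and $T_{\traj_u}$ is finite and well defined. This is exactly what confining $\radius_u(\angpos)$ to the band $(\underline{\radius},\overline{\radius})$ buys us, since the stated monotonicity (together with the implicit positivity of $\Omega$) holds there; to be fully rigorous I would record the positivity of $\Omega$ on this band as the standing assumption it implicitly already is.
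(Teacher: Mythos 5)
Your proof is correct and follows essentially the same route as the paper's: write $T_{\traj_u}$ as the integral of $1/\Omega(\radius_u(\angpos),\angpos)$ over $\angpos\in[0,2\pi)$, apply the pointwise monotonicity of $\Omega$ in $\radius$ within $(\underline{\radius},\overline{\radius})$, and integrate the resulting strict inequalities. Your additional care in justifying the time-to-angle reparametrization and the positivity of $\Omega$ makes the argument slightly more rigorous than the paper's, which takes both for granted, but the underlying idea is identical.
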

 \begin{proof}
 We have
 \begin{align*}
     T_{\traj_u} = \int_{\angpos = 0}^{2\pi} \frac{1}{\Omega(\radius_u(\angpos),\angpos)} d\angpos.
 \end{align*}
 Given that for any $\angpos$, there is $\Omega(\radius_i,\angpos)>\Omega(\radius_j,\angpos)$ if $\radius_i<\radius_j$ and  $(\radius_i,\radius_j)\subset (\underline{\radius},\overline{\radius})$. Since $\radius_u(\angpos)\in(\radius_i,\radius_j), \forall \angpos$, we can show that
 \begin{align*}
       \int_{\angpos = 0}^{2\pi} \frac{1}{\Omega(\radius_i,\angpos)} d\angpos <T_{\traj_u} <\int_{\angpos = 0}^{2\pi} \frac{1}{\Omega(\radius_j,\angpos)} d\angpos.
 \end{align*}
 Therefore $T_{\traj_u}\in(T_i,T_j)$.
 \end{proof}

We can conclude that in a gyre-like flow, if (1) we apply a controller only in the radial direction (\textit{i.e.} towards or away from the center with respect to the object) and (2) the trajectory formed falls between two non-dissipated boundaries $\bdr_i$ and $\bdr_j$ (without loss of generality, assuming that $\star_i\subset \star_j$), then the time needed to complete a cycle of the trajectory \textit{roughly} falls in $(T_i,T_j)$ where $T_i$ and $T_j$ are the time periods completing one cycle of $\bdr_i$ and $\bdr_j$ respectively. For any given range of periodicity $(\underline{T},\overline{T})$, if we can find two non-dissipated trajectories with periodicities in $(\underline{T},\overline{T})$, then applying control in the radial direction to keep the robot in between the two boundary trajectories will ensure its periodicity is bounded by $(\underline{T},\overline{T})$.

Consider a robot capable of producing thrust $u \in [0, u_{max}]$ along an arbitrary orientation $\theta_r\in [0,2\pi)$. Most underactuated robots can be modeled as such given a sufficient time scale $\tau \ll \underline{T}$. Given a desired range of periodicity $(\underline{T},\overline{T})$ and corresponding radii $(\underline{r},\overline{r}) \in \orb$, $\underline{r} < \overline{r}$, the bang-bang control strategy given by
\begin{equation}\label{eq:controllerSpecific}
    (u,\theta_r) = \begin{cases} 
        (u_{max}, \hphantom{-}\angpos) & r \leq \underline{r} \\
        (0, \angpos) & r \in (\underline{r},\overline{r}) \\
        (u_{max}, -\angpos) & r \geq \overline{r}
    \end{cases}
\end{equation}
will ensure the robot stays within the desired region, and therefore maintains the desired periodicity, as long as the region is sufficiently wide $\overline{r} - \underline{r} \geq \epsilon$. 


\section{Experimental Setup} \label{sec:expPlatform}

We experimentally validate the proposed strategy given by \eqref{eq:controllerSpecific} on the Modboat, which is shown in the inset to Fig.~\ref{fig:tankProps} and described in~\cite{modboatsOnline}.

\begin{figure}[t]
    \centering
    \includegraphics[width=\linewidth]{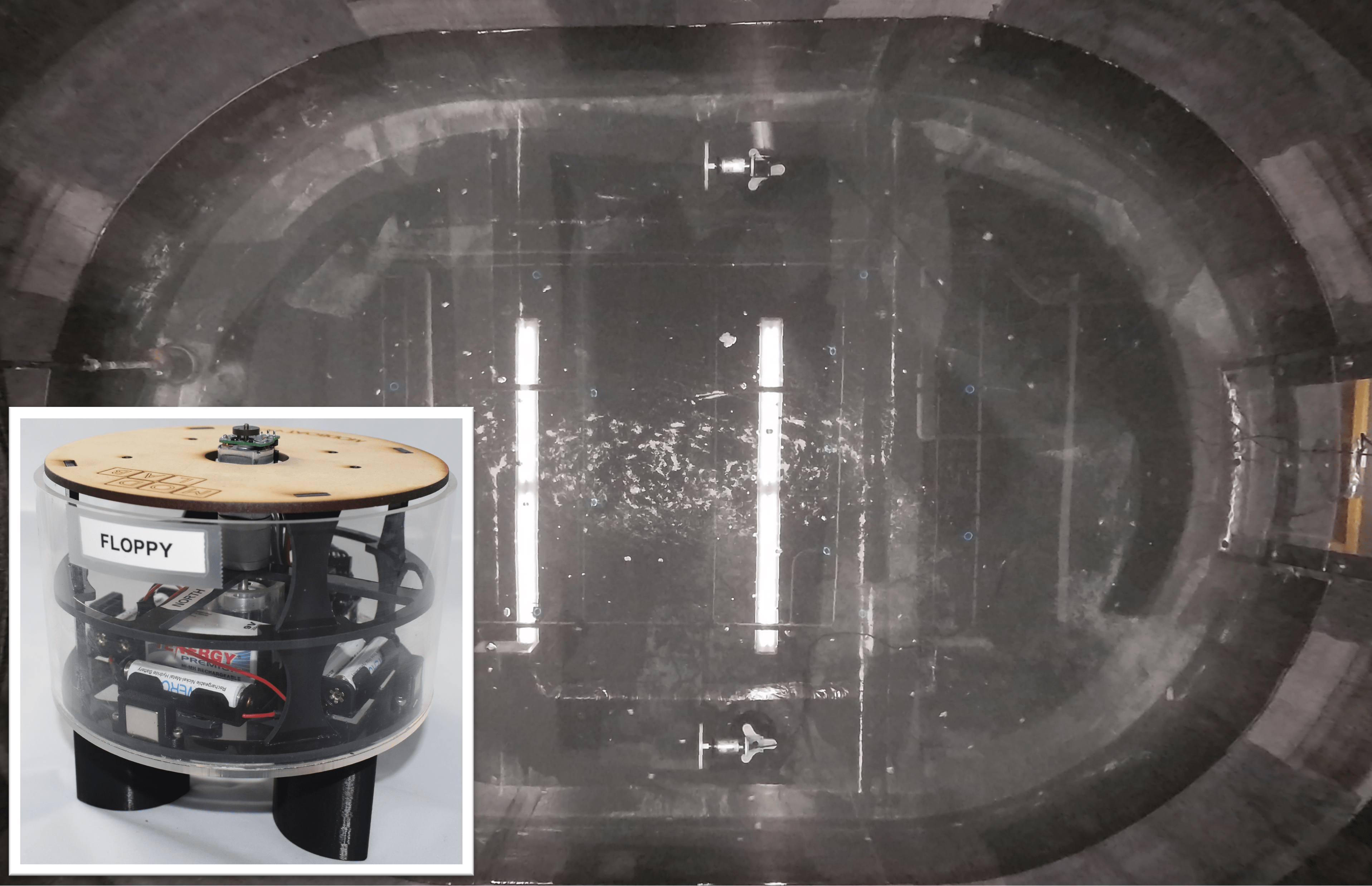}
    \caption{An overhead view of the racetrack shaped experimental tank. The two propellers are visible at the top and bottom of the image, and a glass observation window is present on the right-hand side. Inset: a photo of the Modboat platform used for the experiments in this work. The Modboat is described in~\cite{modboatsOnline}, and the controller used in this work is developed in~\cite{Knizhnik2021ThrustRobot}.}
    \label{fig:tankProps}
\end{figure}


\subsection{Robotic Platform}

The Modboat is a low-cost, underactuated, surface swimming robot developed by the authors at the University of Pennsylvania~\cite{modboatsOnline}\cite{Knizhnik2020}. It consists of a light lower body with two passive flippers, connected by a motor to a massive upper body. Conservation of momentum allows the bodies to rotate relative to one another when the motor is actuated; the leading flipper swings open until it is limited by a hard stop, at which point it creates thrust for the robot. Oscillating the motor causes both flippers to open in sequence and generates a paddling motion that can be used for propulsion and steering\footnote{An informative video of this motion and details of the robot design can be viewed in~\cite{modboatsOnline}.}. An ESP32 microcontroller on-board runs position control on the angle of the motor shaft, and various control modes are possible by passing waveform parameters to the boat over WiFi. 

In this work, the Modboat uses the forced pendulum controller given by
\begin{equation}\label{eq:contLimCyc}
    u = -K \sin{(\omega t)} - \beta \sin{\left (\theta_r(t) - \theta(t) \right )}
\end{equation}
where $u$ is the torque applied to the motor, $\theta$ is the observed orientation of the Modboat, and $\theta_r = \pm \angpos$ is the reference heading from~\eqref{eq:controllerSpecific} along which thrust is generated.  This control strategy was developed in prior work~\cite{Knizhnik2021ThrustRobot} and allows the Modboat to function as a pointable thruster when averaged over a cycle of length $\tau = 2\pi/\omega$. Since $\tau \ll \underline{T}$ for most realistic gyre like flows, this meets the criteria for~\eqref{eq:controllerSpecific}. When inactive, a holding torque maintans the motor position.

The control gain $K$ controls the amplitude of oscillation, and the angular frequency $\omega$ determines their frequency while $\beta = \omega^2$ is used for resonance~\cite{Knizhnik2021ThrustRobot}. Although the values of $K$ and $\omega$ can be adjusted to change controller performance, $K=15~\si{Nm}$ and $\omega=2\pi~\si{rad/s}$ are used for all experimental validation in this work. Under these parameters, the Modboat produces $21\pm1.7~\si{mN}$ of thrust on average along the direction given by $\theta_r$, which results in a top speed of $9.3\pm0.37~\si{cm/s}$ in still water.

We note that in prior work the control law~\eqref{eq:contLimCyc} is combined with a thrust direction controller to allow waypoint tracking~\cite{Knizhnik2021ThrustRobot}. This full control architecture is used when testing in still water and referred to as the \textit{naive/waypoint approach} in this work. In flow-based control we are interested in the effects produced by radial {\it thrust} rather than radial \textit{travel}. As such, only the lowest level of control~\eqref{eq:contLimCyc} is used in the gyre flow experiments.


\subsection{Flow Field Generation} \label{sec:flowGeneration}

Experiments involving the Modboat are conducted in a $4.5~\si{m} \times 3.0~\si{m} \times 1.2~\si{m}$ racetrack shaped tank of water equipped with an OptiTrack motion capture system that provides planar position, orientation, and velocity data at up to $120~\si{Hz}$. Flows of various shapes and speeds can be generated in this tank by the use of propellers. For this work, a single gyre was created by placing two propellers, mounted horizontally and spinning at $200~\si{rpm}$, along the straight edges of the tank as seen in Fig.~\ref{fig:tankProps}. In this configuration, the flow is shaped by the walls of the tank and forms an outward spiraling gyre. Dissipative radial flow is observed to be small, so the flow is mostly angular and meets the criteria described in Sec.~\ref{sec:nonDissipative}.

The flow in the tank is measured using a Vectrino acoustic velocimeter~\cite{nortek}, which provides two-dimensional velocity measurements of flow seeded with $50~\si{\mu m}$ glass beads. The flow is initiated and allowed to reach steady state. Assuming the flow is two-dimensional, the Vectrino is used to take velocity measurements at various locations, which can then be used to fit the model described in Sec.~\ref{sec:flow_field_model}. 


\subsection{Flow Field Modeling} \label{sec:flow_field_model}

Modeling the flow described in Sec.~\ref{sec:flowGeneration} is challenging, since the experimental enclosure is shaped like a racetrack (see Fig.~\ref{fig:tankProps}), which cannot be easily modeled by a continuous function. Moreover, the flow is driven from the external edge.  To the authors' best knowledge, these flows have not been studied in depth. However, we experimentally observe that the flow forms mostly concentric orbits whose shape is given by the shape of the enclosure, and we assume the flows roughly meet the properties listed in Sec.~\ref{sec:problem}.

An approximation of the tank shape can be generated using interpolated implicit functions~\cite{Turk2001ImplicitInterpolate,Chaimowicz2005ControllingFunctions,Hsieh2007StabilizationSensing}, which can be used to generate continuous approximations $\gamma = s(x,y)$ of arbitrary functions. It has been shown that for such curves a shape navigation function 
\begin{equation}\label{eq:shapeNavigation}
    \phi(q) = \frac{\gamma^2}{\gamma^2 + (R_0 - \norm{q})}
\end{equation}
can be defined~\cite{Hsieh2007StabilizationSensing} for a robot located at $q\in\mathbb{R}^2$ in a circular workspace of radius $R_0$.  The gradient of the shape navigation function, $\nabla_q \phi(q)$, gives a potential field
\begin{equation}\label{eq:potentialField}
    f = -K_r \nabla \phi - K_\theta \nabla \times \begin{bmatrix} 0 & 0 & \gamma \end{bmatrix}^T  
\end{equation}
in which $K_r$ and $K_\theta$ are constants balancing the radial and angular components of the field, respectively. This field approximates concentric orbits flowing towards the boundary $\gamma$~\cite{Hsieh2007StabilizationSensing}, and $K_r$ and $K_\theta$ can be obtained by a least-squares fit using the flow velocity measurements obtained in Sec.~\ref{sec:flowGeneration}.

The flow-field in~\eqref{eq:potentialField} is used for simulation studies, but the controller~\eqref{eq:controllerSpecific} requires a radius relative to a circular flow. To achieve this we transform the racetrack coordinate system into the form shown in~\eqref{eq:orbit_field} using 
\begin{equation}\label{eq:circleMap}
    C(q) = r_{max}\sqrt{1 + \gamma} \left (\frac{q - g}{\norm{q - g}} \right )
\end{equation}
for a gyre centered at $g\in\mathbb{R}^2$~\cite{Rimon1992ExactFunctions}.  It is important to note that this flow model is only approximate. In~\cite{Hsieh2007StabilizationSensing}, the primary desired behavior is approach and stabilization on the boundary $\gamma$, so the exact behavior of the function in the interior is not significant. In flow characterization, however, this is far more significant. In fact, the exact interior behavior of the implicit interpolating function $\gamma$ shows significant dependence on the exact interior constraints, which impacts the shape of the flow.  While these issues require further exploration, they are beyond the scope of this work.


\section{Results} \label{sec:results}


\subsection{Simulation}\label{sec:resultsSim}

We use the flow field model given in Sec.~\ref{sec:flow_field_model} to demonstrate that the proposed high-level controller given by~\eqref{eq:controller} can effectively enable a minimally actuated robot to periodically cycle in gyre-like flows such that the period lies within a desired range. We note that the flow field is approximate in two respects. First, it fails to capture the shape of the flow close to the wall of the tank, where features like the observation window disturb it. Second, although the real flow shows monotonically changing angular velocities the simulated flow field does so only in the center, as shown in Fig.~\ref{fig:angularVelocity}. Therefore, in simulation we consider a smaller interior region where the constraints from Theorem~\ref{thm:periodBound} hold (shown in Fig.~\ref{fig:boundaryInPool}), whereas experimental validation utilizes the entire area.

We consider two non-dissipated boundary trajectories within this interior region, as shown in Fig.~\ref{fig:pool_model}. Without a radial component of the flow, completing one cycle of the outer bound takes $56.92~\si{s}$ and the period of the inner bound is $42.67~\si{s}$. A simplified version of~\eqref{eq:controllerSpecific} is used in simulation; the controller activates whenever the robot reaches the outer bound, and remains activated until the inner bound is reached. No outward control is applied because the radial component of the flow provides outward movement. While active, the controller provides a velocity randomly sampled from the range $(0.01,0.04)~\si{m/s}$.

\begin{figure}[t]
    \centering
    \subfloat[\label{fig:boundaryInPool}]{\includegraphics[width=0.47\linewidth, trim={0.0cm 0 1.0cm 0.0cm},clip]{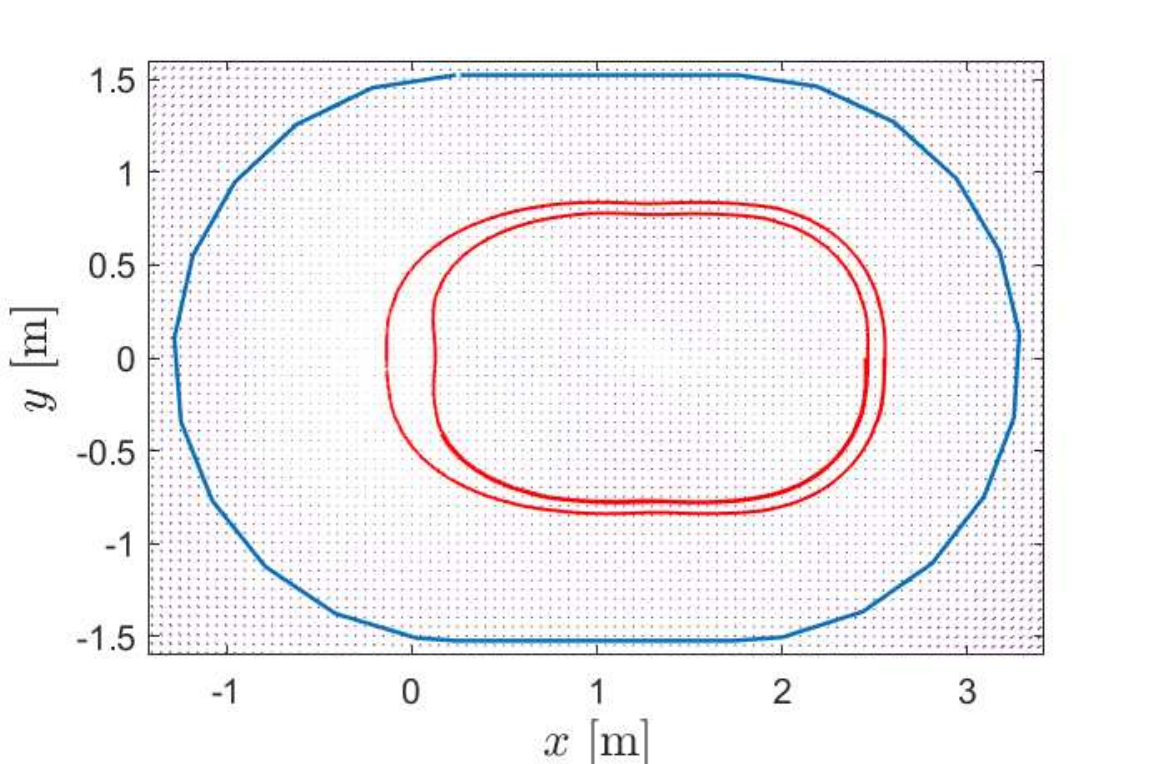}
       }
          \hfill
    \subfloat[\label{fig:angularVelocity}]{\includegraphics[width=0.46\linewidth, trim={0.0cm 0 0.5cm 2.5cm},clip]{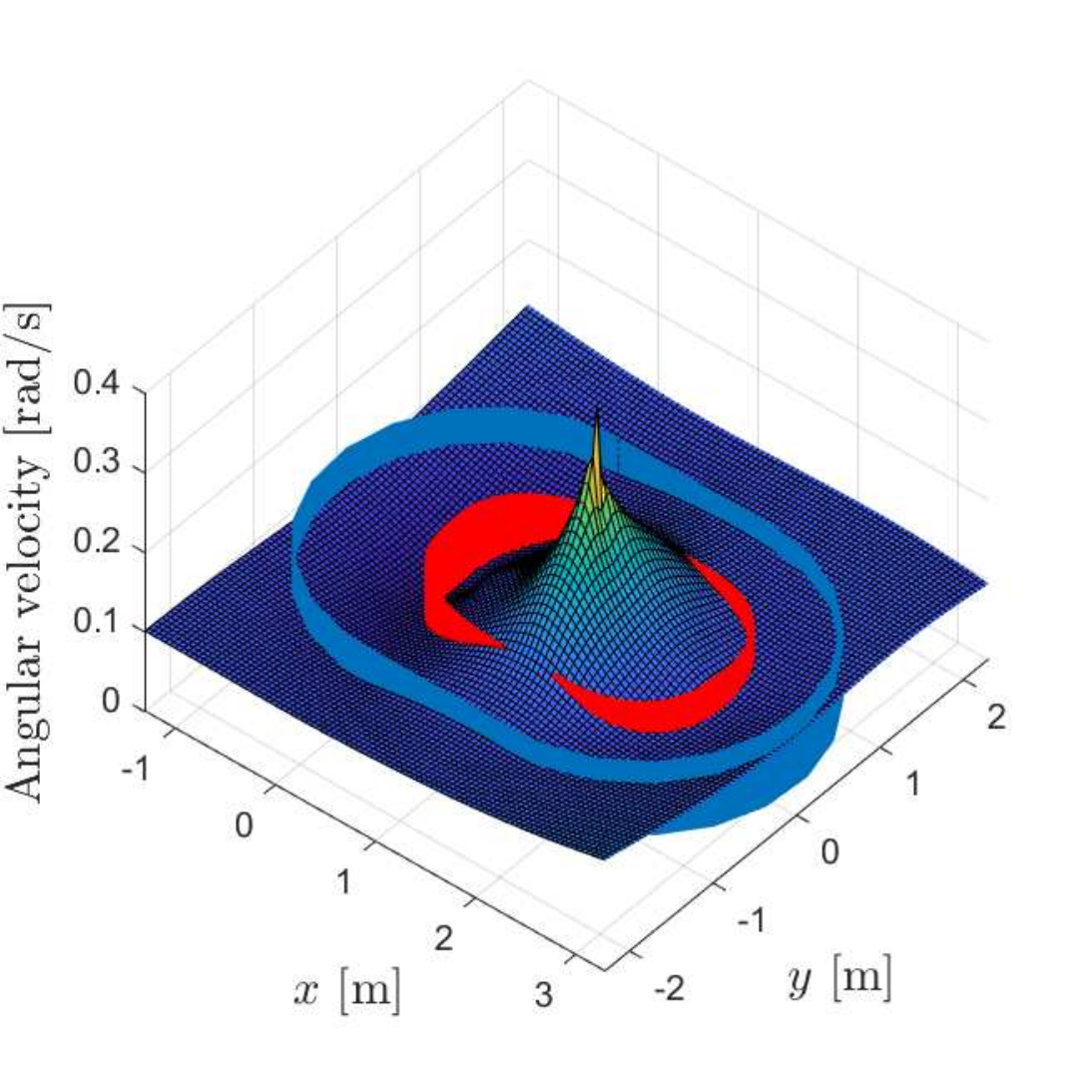}
       }
    \caption{Regions relevant to the simulation studies, with an overhead view in (a) and angular velocity plotted in (b). The tank boundary is shown in blue, and the two non-dissipated boundary trajectories used in the study are shown in red. Note the non-monotonic behavior outside the boundary region in (b).}
    \label{fig:pool_model}\vspace{-2mm}
\end{figure}

In Fig.~\ref{fig:sim_results} we show the trajectories of the robots in a $5000~\si{s}$ simulation as well as the periods the robot takes to complete each cycle. Although the trajectories of each round vary significantly, the robot always completes a cycle with a period that falls in the specified range of $(42.67, 56.92)~\si{s}$.

\begin{figure}[t]
    \centering
    \subfloat[\label{fig:simTraj}]{\includegraphics[width=0.47\linewidth, trim={0.25cm 0.0cm 1.0cm 0.5cm},clip]{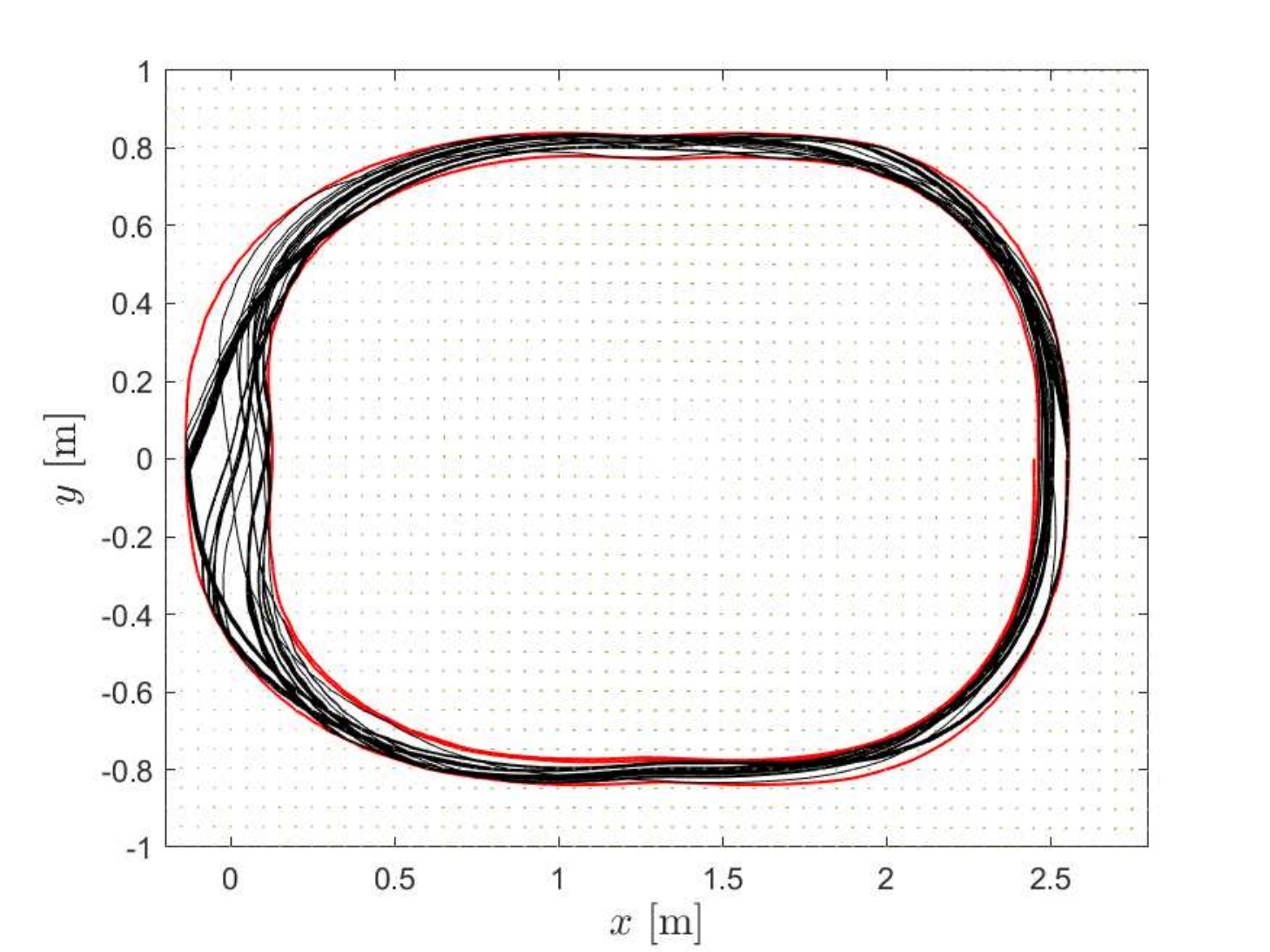}
       }
          \hfill
    \subfloat[\label{fig:simOrbitTimes}]{\includegraphics[width=0.47\linewidth, trim={0.25cm 0.0cm 1.0cm 0.5cm},clip]{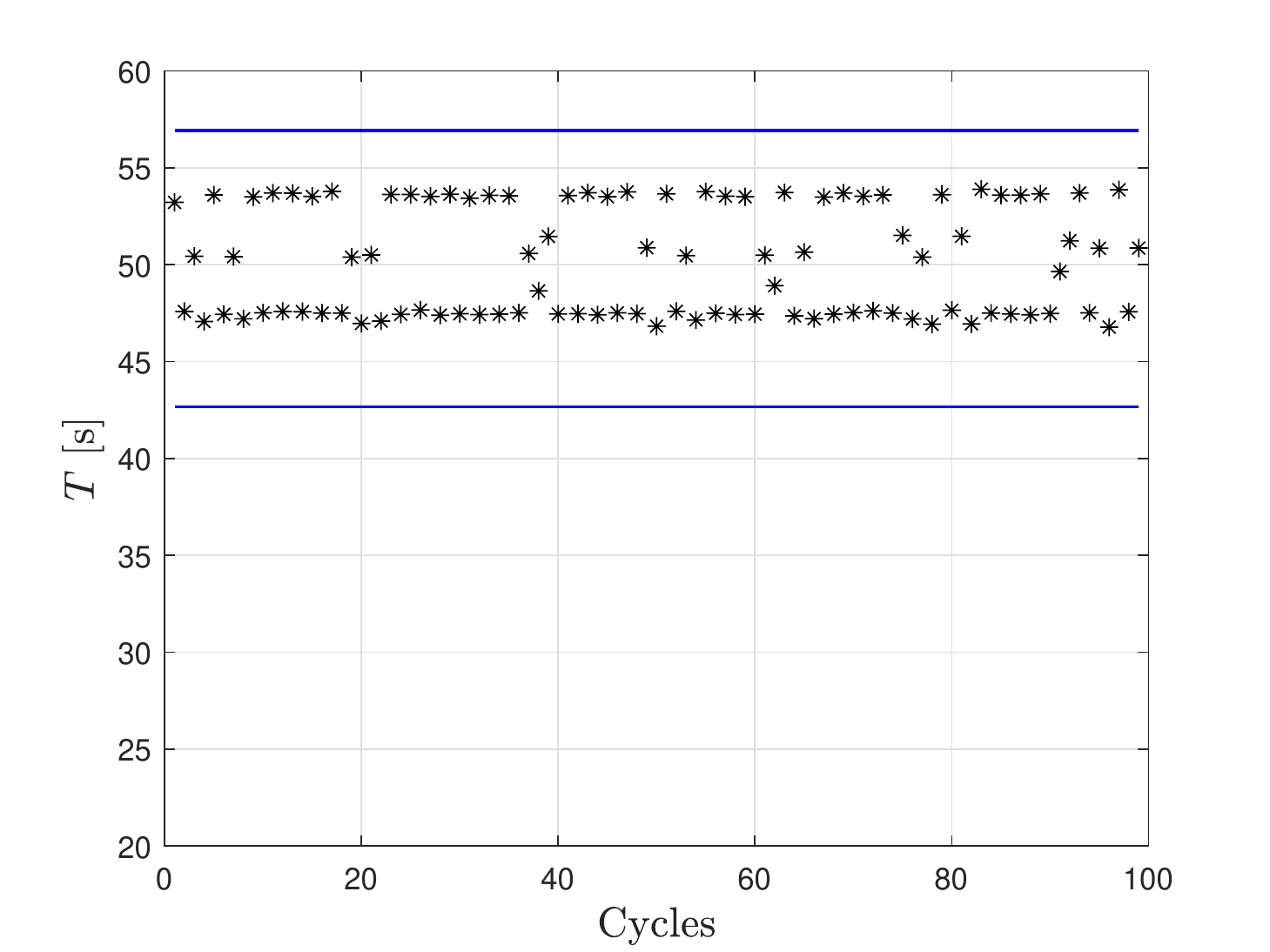}
       }
    \caption{Simulation study results for $5000~\si{s}$. (a) The simulated trajectory and the non-dissipative boundary trajectories are shown in black and red, respectively. (b) The periods $T$ for each cycle. Every cycle is completed between $42.67~\si{s}$ and $56.92~\si{s}$.}
    \label{fig:sim_results}\vspace{-2mm}
\end{figure}


\subsection{Experiments}\label{sec:resultsExp}

Poor agreement between the flow model (as presented in Sec.~\ref{sec:flow_field_model} and Fig.~\ref{fig:angularVelocity}) and the observed flow made choosing appropriate non-dissipated boundary trajectories for experimentation difficult. As an approximation, boundaries were generated by using the inverse of the transformation in~\eqref{eq:circleMap} to generate racetrack trajectories corresponding to various radii spaced $0.1~\si{m}$ apart. The closest flow measurements from Sec.~\ref{sec:flowGeneration} were assumed to hold constant along the trajectory, and the desired periodicity was then approximated as the path length divided by the velocity. Controller~\eqref{eq:controllerSpecific} was then used to enable the Modboat to stay within the desired region. A sample such trajectory is shown in Fig.~\ref{fig:trajectory}, and the accompanying control variables in Fig.~\ref{fig:controlVars}. The periodicity of the resulting orbits is presented in Table~\ref{tab:resultsPerRadius} along with the corresponding control effort (measured as active time normalized to orbital period); the listed radius corresponds to the interior boundary of the desired region. Each test was repeated five times, and the presented data is averaged over all recorded complete cycles.

\begin{table}[t]
    \centering
    \caption{Orbital period \& control effort per period for flow-based control at interior radii for $0.1\si{m}$ wide bands, as $\mu\pm\sigma$.}
    \begin{tabular}{rl|c|c|c}
    \toprule
    Radius & $[\si{m}]$ & $0.90$ & $1.1$ & $1.3$ \\ \midrule
    Prescribed Period & $[\si{s}]$ & $82-97$ & $75-83$ & $66-70$ \\
    Observed Period & $[\si{s}]$ & $98 \pm 11 $ & $85 \pm 5.5$ & $76 \pm 3.6$ \\
    Control Effort & $[\si{\%}]$ & $10 \pm 3.7$ & $16 \pm 5.5$ & $23 \pm 9.1$\\\bottomrule
    \end{tabular}
    \label{tab:resultsPerRadius}
\end{table}
\begin{table}[t]
    \centering
    \caption{Orbital periods \& control effort per period for varying control modes in $r\in(1.3-1.4)$. Presented as $\mu \pm \sigma$.}
    \begin{tabular}{rl|c|c|c|c}
    \toprule
     Control & & \multicolumn{2}{c}{Waypoints} & No control & Flow-based \\
    strategy &  & w/out flow & w/ flow & w/ flow & w/ flow \\ \midrule
    Period & $[\si{s}]$ & $130 \pm 4.0 $ & $54 \pm 1.5$ & $72 \pm 5.5$ & $76 \pm 3.6$ \\
    Control & $[\si{s}]$ & $130 \pm 4.0 $ & $54 \pm 1.5$ & $0.0 \pm 0.0$ & $17 \pm 6.2$ \\\bottomrule \vspace{-6ex}
    \end{tabular}
    \label{tab:resultsPerController}
\end{table}

Since flow-based control is meant to aid resource-constrained robots and create more energy efficient swimming, we also compared the observed performance to a naive swimming approach designed for still water using the desaturated thrust direction controller developed in~\cite{Knizhnik2021ThrustRobot}, which aims the Modboat at successive waypoints. The orbit was approximated by waypoints placed every $0.5~\si{m}$ along the transformed radius $r=1.35~\si{m}$ and executed both with and without flow. This was compared to the flow-based control maintaining the same orbit, and to a case where no control was applied\footnote{To set the initial region, the Modboat is powered according to~\eqref{eq:controllerSpecific} until it gets to the desired region. It is then unpowered for the rest of the test.}. Each test was repeated five times, and the results are shown in Table~\ref{tab:resultsPerController}, with control effort given as active time.

\begin{figure}[t] 
    \centering
    \includegraphics[width=\linewidth, trim={1.0cm 0.5cm 1.0cm 0.5cm}]{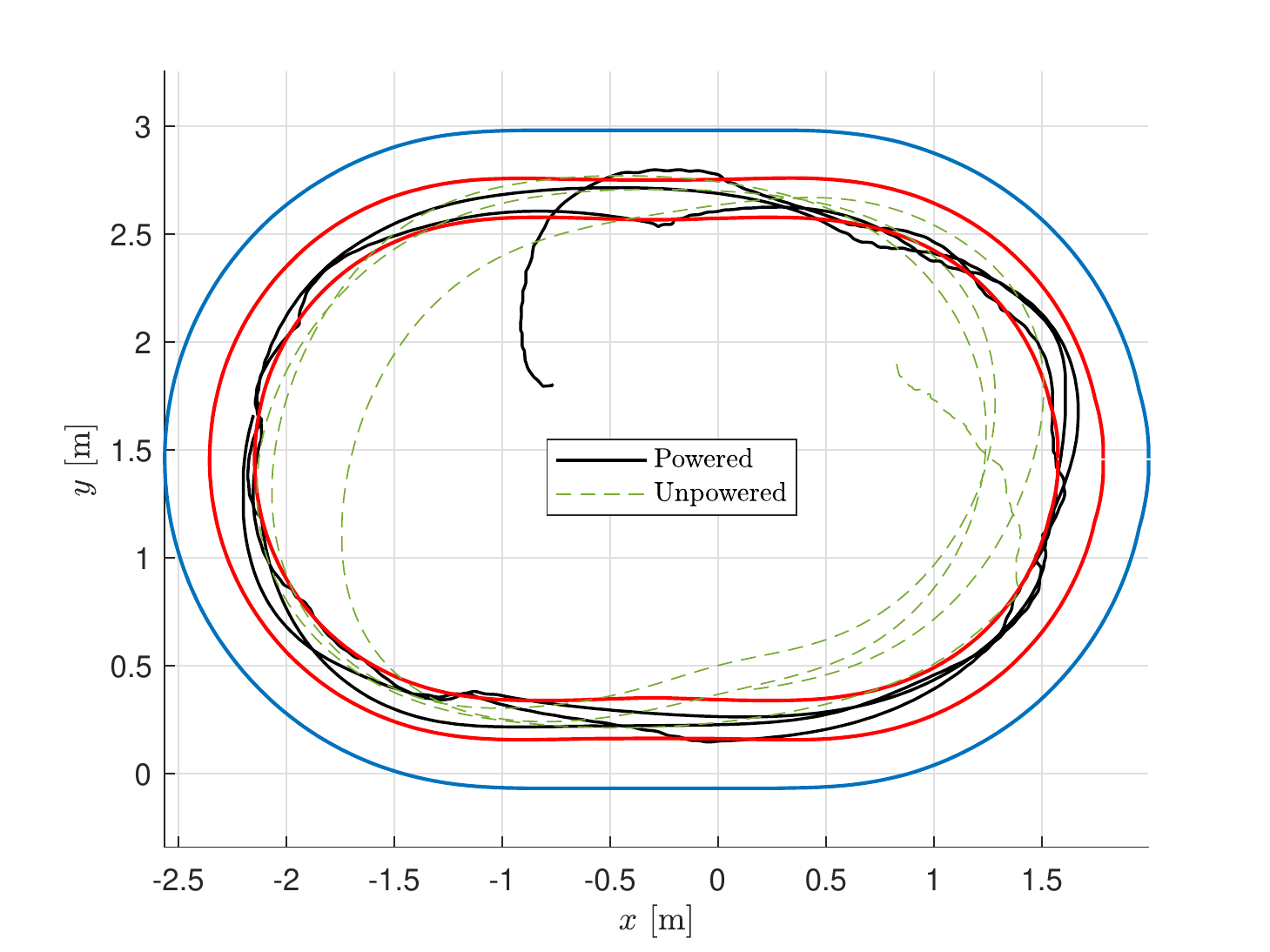}
    \caption{Sample flow-based control (solid black) and unpowered (dashed green) trajectories of a Modboat in a clockwise flow, starting from the interior. The desired region boundaries are shown in red, and the tank boundary in blue. Note the difference in shape between the observed orbits and desired region, especially at the bottom right. }
    \label{fig:trajectory}
\end{figure}\vspace{-3ex}


\section{Discussion}\label{sec:discussion}

The simulation results validate our conclusion about the controller designed in Sec.~\ref{sec:methodology}. In the simulation presented in Fig.~\ref{fig:sim_results}, the controller is able to keep the robot between the two boundary oribts using only a minimal radial velocity. Moreover, the controller succeeds despite providing inaccurate output velocities and knowing only its relative position in the flow. Lacking a detailed model of the flow field or knowledge of its own velocity in the flow, the robot can still maintain a semi-periodic orbiting motion in the flow, even though the actual trajectory varies every round. The period of each cycle varies as well but always stays between the periods of the two non-dissipative boundary orbits. This aligns with our predictions as in Theorem~\ref{thm:periodBound}. However, the simulation is run for a region that is significantly smaller than the region used in real experiments (compare Figs.~\ref{fig:boundaryInPool} and~\ref{fig:trajectory}). This is necessary to stay within a region of monotonically changing angular velocity, which holds in real world conditions but not in the model, as can be seen in Fig.~\ref{fig:pool_model}.

Experimental results similarly validate our expectation that flow-based control can maintain consistent orbital periods by leveraging approximate knowledge of the gyre center and orbit shape. As presented in Table~\ref{tab:resultsPerRadius}, flow-based control consistently generates orbital periods that are \textit{close to} the desired periodicity but higher than it. This is reasonable, since the desired values assume massless fluid particles and the Modboat is expected to orbit more slowly. This also results in occasionally exiting the desired region (as shown in Fig.~\ref{fig:trajectory}), since it takes the controller~\eqref{eq:controllerSpecific} time to overcome the Modboat's inertia. This further pushes the observed orbital periods past their target ranges. 

The observed periods deviate most strongly from the desired values at the largest radii. We believe this is because close to the tank wall the actual flow field deviates more significantly from the assumed flow of the model, especially around the viewing window seen in the right of Fig.~\ref{fig:tankProps}. This is further corroborated by the unpowered trajectory in Fig.~\ref{fig:trajectory}, which shows orbits whose shape does not match the desired boundaries, especially in the bottom right (which follows the observation window in the flow direction). A better flow model can be constructed and is expected to lead to more accurate boundary generation and better agreement between the desired and experimentally achieved period bounds.

Even with poor boundary generation, however, experimental validation demonstrates that our flow-based control is able to keep the Modboat mostly in the desired region. Our strategy provides clear benefits in terms of energy efficiency when compared to more naive approaches, as summarized in Table~\ref{tab:resultsPerController}; it takes $1.4$ times as long to complete an orbit of the gyre, but only requires $31\%$ as much control effort. This is clear in Fig.~\ref{fig:controlVars}, where the controller is visibly inactive for the majority of the test. Thus a resource-constrained robot utilizing our control approach would last nearly $3.2$ times as long (in the worst case) in an ocean monitoring situation (assuming non-propulsive power consumption, i.e. hotel load is minimal).

\begin{figure}[t]
    \centering
    \includegraphics[width=\linewidth, trim={0.5cm 0.5cm 0.5cm 0.0cm}]{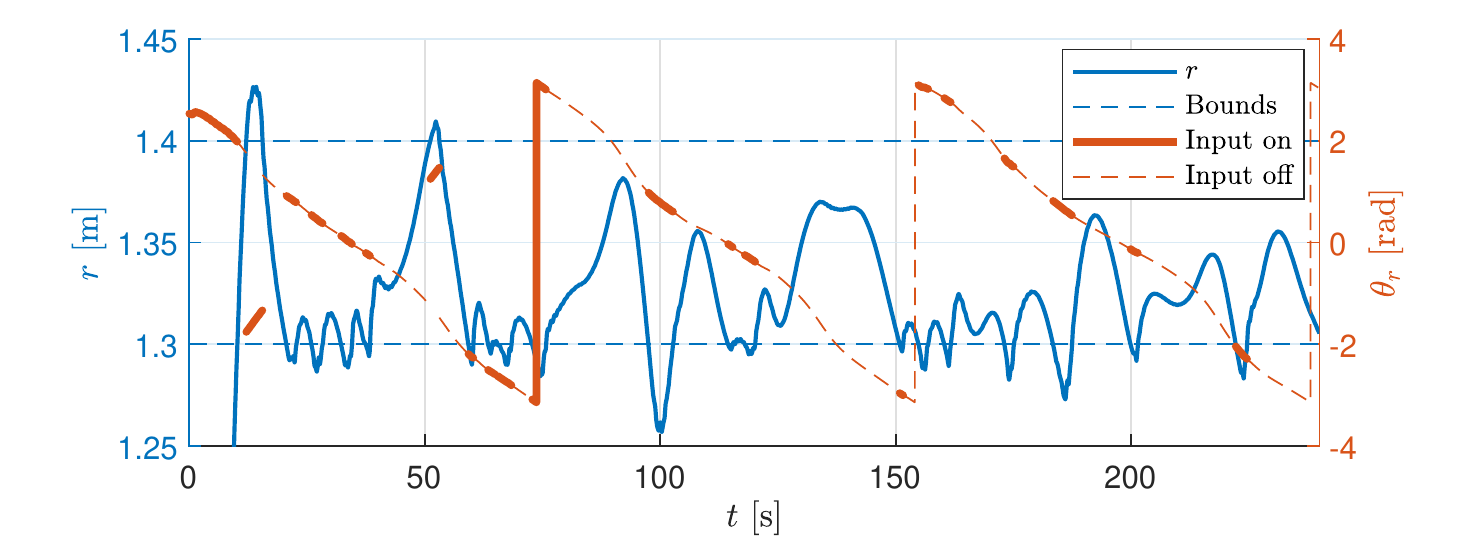}
    \caption{Control variables for the powered test in Fig.~\ref{fig:trajectory}. The blue line indicates the transformed radius $r$, bounded by dotted lines for $(\underline{r},\overline{r})$. The thick (thin) orange line indicates $\theta_r$ from~\eqref{eq:controller} when the controller is on (off).}
    \label{fig:controlVars}
\end{figure}


\section{Conclusion} \label{sec:conclusion}
We have shown that a \textit{flow-based control approach} using approximate knowledge of the gyre center is sufficient to allow simple pointable thruster robots --- with the Modboat as an example --- to maintain a desired periodicity within a flow field. This allows them to function as active drifters, maintaining a desired orbital region with minimal control effort while mostly utilizing surrounding flow for navigation and control. A simple control scheme compatible with highly uncertain and time-averaged control inputs was used, which makes these results applicable to a wide range of underactuated or low-powered systems.

We have also demonstrated that this flow-based approach shows significant energy savings over naive waypoint tracking approaches. Observed energy savings are over $68\si{\%}$ despite poor correspondence between the selected orbit shape and any true orbit. We anticipate that better flow modeling and the selection of a more appropriate region can lead to even greater energy efficiency, as can more responsive control laws. This opens the door to long monitoring missions that are not significantly constrained by the energy density of their batteries even for resource-constrained systems. 

Since poor flow modeling resulted in poor boundary region selection in experiments and limited useful area in simulation, future work will consider more accurate flow modeling to improve the performance of the system, even with limited information. Future work will also consider whether flow modeling can be avoided by using local measurements to approximate the necessary flow knowledge. Furthermore, in this work, we focused simulation and experimental validation in near perfect gyre flows. Another direction of future work including validating and extending the proposed strategy to more general flows.


\bibliographystyle{./bibliography/IEEEtran}
\bibliography{./bibliography/IEEEabrv,./bibliography/nonpaper,./bibliography/references,./bibliography/add,./bibliography/field_planning,./bibliography/LCSrefs,./bibliography/stochasticEscape,./bibliography/drifter}

\end{document}